\newcommand{\tuple}[1]{\ensuremath{( #1 )}\xspace}
\newcommand*{\upperRomannumeral}[1]{\uppercase\expandafter{\romannumeral#1}}
\newcommand{\secref}[1]{Section~\ref{#1}}
\newcommand{\defref}[1]{Definition~\ref{#1}}
\newcommand{\lemref}[1]{Lemma~\ref{#1}}
\newcommand{\thmref}[1]{Theorem~\ref{#1}}
\newcommand{\equref}[1]{Equation~\ref{#1}}
\newcommand{\algmref}[1]{Algorithm~\ref{#1}}
\newcommand{\cref}[1]{{\cite{#1}}}
\newcommand{\figref}[1]{Fig.~\ref{#1}}
\newcommand{\tabref}[1]{Table~\ref{#1}}
\newcommand{\asmref}[1]{Assumption~\ref{#1}}
\newcommand{\ie}{\textit{i.e.}}
\newcommand{\eg}{\textit{e.g.}}
\newcounter{cmt}
\newcommand{\rocksample}{Rocksample\xspace}
\newcommand{\battleship}{Battleship\xspace}
\newcommand{\his}{\ensuremath{h}\xspace}
\newcommand{\rewFp}{\ensuremath{R}\xspace}
\newcommand{\discount}{\ensuremath{\gamma}\xspace}
\newcommand{\actsph}{\ensuremath{A}\xspace}
\newcommand{\obssph}{\ensuremath{O}\xspace}
\newcommand{\stsph}{\ensuremath{X}\xspace}
\newcommand{\parsph}{\ensuremath{\Theta}\xspace}
\newcommand{\traFh}{\ensuremath{T}\xspace}
\newcommand{\obsFh}{\ensuremath{Z}\xspace}
\newcommand{\rewFh}{\ensuremath{R}\xspace}
\newcommand{\acth}{\ensuremath{a}\xspace}
\newcommand{\obsh}{\ensuremath{o}\xspace}
\newcommand{\sth}{\ensuremath{x}\xspace}
\newcommand{\parh}{\ensuremath{\theta}\xspace}
\newcommand{\actspm}{\ensuremath{A}\xspace}
\newcommand{\stspm}{\ensuremath{S}\xspace}
\newcommand{\obsspm}{\ensuremath{\tilde{O}\xspace}}
\newcommand{\obsm}{\ensuremath{\tilde{o}\xspace}}
\newcommand{\traFm}{\ensuremath{T}\xspace}
\newcommand{\rewFm}{\ensuremath{R}\xspace}
\newcommand{\actm}{\ensuremath{a}\xspace}
\newcommand{\stm}{\ensuremath{s}\xspace}
\newcommand{\parm}{\ensuremath{\theta}\xspace}
\newcommand{\nul}{\ensuremath{null}\xspace}
\newcommand{\plyh}{\ensuremath{\phi}\xspace}
\newcommand{\stspim}{\ensuremath{S}\xspace}
\newcommand{\actspim}{\ensuremath{A}\xspace}
\newcommand{\traFim}{\ensuremath{T}\xspace}
\newcommand{\rewFim}{\ensuremath{\tilde{R}}\xspace}
\newcommand{\actim}{\ensuremath{a}\xspace}
\newcommand{\stim}{\ensuremath{s}\xspace}
\newcommand{\mpr}{\ensuremath{\theta}\xspace}
\newcommand{\disRB}{\ensuremath{\beta}\xspace}
\newcommand{\disvfn}{\ensuremath{\gamma}\xspace}
\newcommand{\vrep}{V-REP\xspace}
\newcommand{\mmdp}{Mean MDP\xspace}
\newcommand{\uct}{UCT\xspace}
\newcommand{\pomcp}{POMCP\xspace}
\newcommand{\despot}{DESPOT\xspace}
\newcommand{\qmdp}{QMDP\xspace}
\newcommand{\sarsop}{SARSOP\xspace}
\newcommand{\pomdp}{POMDP\xspace}
\newcommand{\hpmdp}{POMDP-lite\xspace}
\newcommand{\hgmdp}{HGMDP\xspace}
\newcommand{\hipmdp}{HiP-MDP\xspace}
\newcommand{\brl}{BRL\xspace}
\newcommand{\pacmdp}{PAC-MDP\xspace}
\newcommand{\mdp}{MDP\xspace}
\newcommand{\online}{\emph{online}\xspace}
\newcommand{\offline}{\emph{offline}\xspace}
\newcommand{\knownMK}{\textbf{known state-action MDP}}
\newcommand{\Lone}{\ensuremath{L_1}\xspace}
\newcommand{\hpmdpM}{\ensuremath{\mathcal{H}}}
\newcommand{\collectionM}{\ensuremath{\mathcal{C}}}
\newcommand{\rebonus}{\ensuremath{RB}\xspace}
\newcommand{\vfn}{\ensuremath{V}\xspace}
\newcommand{\optvfn}{\ensuremath{V^*}\xspace}
\newcommand{\optvfni}{\ensuremath{\tilde{V}^*}\xspace}
\newcommand{\pivfn}{\ensuremath{V^{\pi}}\xspace}
\newcommand{\vfni}{\ensuremath{\tilde{V}}\xspace}
\newcommand{\qfni}{\ensuremath{\tilde{Q}}\xspace}
\newcommand{\escapeE}{\ensuremath{A_K}\xspace}
\newcommand{\bigO}{\ensuremath{O}\xspace}
\newcommand{\bigOhead}{\ensuremath{\tilde{O}}\xspace}
\newcommand{\sampCplx}{\ensuremath{\zeta}\xspace}
\newcommand{\sampAccu}{\ensuremath{\kappa}\xspace}
\newcommand{\plcypi}{\ensuremath{\pi}\xspace}
\newcommand{\optplcypi}{\ensuremath{\pi^*}\xspace}
\newcommand{\optply}{\ensuremath{\mathcal{A}}\xspace}
\DeclareMathOperator*{\argmax}{arg\,max}
\newcommand{\ev}{\ensuremath{\mathrm{E}}\xspace}
\newcommand{\mRsp}{\ensuremath{\mathbb{R}}\xspace}
\newcommand{\mTsp}{\ensuremath{\mathbb{T}}\xspace}
\newcommand{\accRB}{\ensuremath{\mathcal{RB}}\xspace}
\newcommand{\accRE}{\ensuremath{\mathcal{RE}}\xspace}
\newcommand{\vE}{\ensuremath{E}\xspace}
\newcommand{\vB}{\ensuremath{B}\xspace}
\newcommand{\upbvi}{\ensuremath{\frac{1}{1 - \gamma}}\xspace}
\newcommand{\ith}{\ensuremath{i^{th}}\xspace}
\renewcommand{\algorithmiccomment}[1]{\bgroup\hfill\tiny//~#1\egroup}
\newtheorem{defn}{Definition}
\newtheorem{them_1}{Theorem}
\newtheorem{lemma}{Lemma}
\newtheorem{assum}{Assumption}
\title{\bf \hpmdp for Robust Robot Planning under Uncertainty}
\author{Min Chen \quad Emilio Frazzoli \quad David Hsu \quad Wee Sun Lee%
\thanks{Chen, Hsu, and Lee are with the Department of Computer
  Science, National University of Singapore, Singapore 117417, Singapore.
  Frazzoli is with the  Laboratory for Information \& Decision Systems, Massachusetts Institute of Technology, Cambridge, MA 02139, USA
}%
\thanks{Chen is supported by Singapore-MIT Alliance for Research \& Technology (SMART) graduate fellowship,
  Frazzoli is supported by the Singapore NRF through the SMART Future Urban Mobility IRG,
  Hsu is supported by
  A*STAR Industrial Robotics Program grant R-252-506-001-305
  and Lee is supported by AOARD grant FA2386-12-1-4031. The
  views and conclusions contained herein are those of the authors and should
  not be interpreted as necessarily representing the official policies or
  endorsements, either expressed or implied, of the Air Force Research
  Laboratory or the U.S. Government.}
}
\begin{document}

\maketitle
\thispagestyle{empty}
\pagestyle{empty}

\begin{abstract}
The partially observable Markov decision process (POMDP) provides a principled
general model for planning under uncertainty. However, solving a general POMDP
 is computationally intractable in the worst case. This paper introduces
\emph{\hpmdp},
a subclass of POMDPs in which the hidden state variables are constant
or only change deterministically.
We show that a \hpmdp is equivalent to a
set of fully observable Markov decision processes indexed by a hidden
parameter and is 
useful for modeling a variety of interesting robotic tasks.
We develop a simple model-based Bayesian reinforcement learning
algorithm to solve \hpmdp models.
The algorithm  performs well on large-scale \hpmdp
models with up to $10^{20}$ states and outperforms the state-of-the-art
general-purpose POMDP algorithms. We
further show that the algorithm is near-Bayesian-optimal
under suitable conditions.
\end{abstract}

\section{INTRODUCTION}
\label{sec:intro}

Imperfect robot control, sensor noise, and unexpected
environment changes all contribute to uncertainties and pose
significant challenges to robust robot planning.
Robots must \emph{explore} in order  to gain
information and reduce uncertainty. At the same time, they must
\emph{exploit} the information to achieve
task objectives.  The \emph{partially observable Markov decision process}
(POMDP) \cite{KaeLit98,Son71} provides a principled general framework to
balance exploration and exploitation optimally. It has found application in
many robotic tasks, ranging from navigation~\cite{RoyThr99},
manipulation~\cite{HsiKae07,KovPol15} to human-robot
interaction~\cite{NikRam15}.  However, solving POMDPs exactly is
computationally intractable in the worst
case~\cite{papadimitriou1987complexity}.  While there has been rapid progress
on efficient approximate POMDP algorithms in recent years (\eg,
\cite{SmiSim05,KurHsu08,SilVen10,SomYe13,SeiKur15}), it remains a challenge to
scale  up to very large
POMDPs with complex dynamics.  


The complexity of a POMDP lies in the system dynamics, partial observability,
and particularly, the confluence of the two.  
We introduce \emph{\hpmdp}, a
factored model that restricts partial observability to state variables
that are constant or change deterministically. 
While this may appear restrictive, the \hpmdp is 
powerful enough to model a variety of interesting robotic tasks:
\begin{itemize}
\item \emph{Unknown system parameters}.
  A double-pendulum acrobot with link mass unknown \textit{a priori}
  swings up to the standup
  configuration~\cite{BaiHsu13a,DosKon13}.    
\item \emph{Unknown types}.
  In autonomous driving,
  a robot vehicle encounters
  a human driver of unknown behavior
  at an uncontrolled traffic intersection~\cite{BanWon12}.
  In a human-robot collaborative task, the robot poses an object to the human
  according to
  unknown human preference~\cite{NikRam15}. 
\item \emph{Unknown goals}. An assistive agent helps a human cooking one of
  several dishes, without knowing the human's intention in advance~\cite{FerNat07}.
\end{itemize}
These tasks all require the robot to gather information on the unknown
quantities from noisy observations,
while achieving the task objective at the same time. 
They in fact belong to a special case, in which the hidden variables
remain \emph{constant} throughout.
We mainly focus on this special case here.

Interestingly, the famous \emph{Tiger} problem, which appeared in a seminal
paper on POMDPs~\cref{KaeLit98}, also belongs to this special case, after a
small modification. In Tiger, an agent stands in front
of two closed doors. A tiger is behind one of the doors. The agent's objective
is to open the door without the tiger.  In the POMDP model, the state is the
unknown tiger position. The agent has three actions: open the left door (OL),
open the right door (OR), and listen (LS). OL and OR produce no
observation. LS produces a noisy observation, tiger left (TL) or tiger right
(TR), each correct with probability $0.85$. Listening has a cost of $-1$.  If
the agent opens the door with no tiger, it gets a reward of $10$; otherwise,
it incurs a penalty of $-100$.  To perform well, the agent must decide on the
optimal number of listening actions before taking the open action.
While Tiger is a toy problem,  it captures the essence of robust
planning under uncertainty: trade off gathering information and exploiting the
information to achieve the task objective.  The original
Tiger is a repeated game. Once the agent opens a door, the game resets with 
the tiger going behind the two doors with
equal probability. We change it into a one-shot game: the game terminates
once the agent opens a door. The one-shot game has a single state variable,
the tiger position, which remains unchanged during the game, and thus admits a
\hpmdp model. The repeated game is a POMDP, but not a \hpmdp.

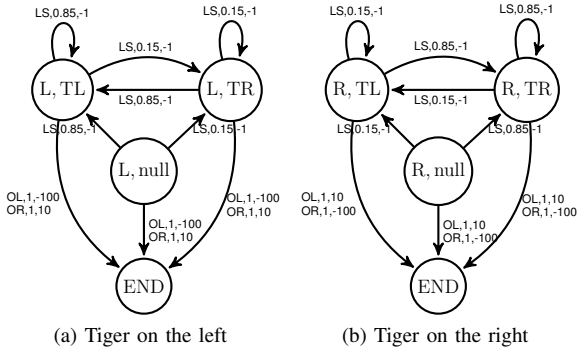
\begin{figure}
\centering
\begin{tabular}{@{\hspace{0em}} c @{\hspace{0em}} c @{\hspace{0em}}}
\subfloat[Tiger on the left]{
\begin{tikzpicture}[->,>=stealth',shorten >=1pt,auto,node distance=3cm,
thick,main node/.style={scale=0.51,circle,fill=white!20,draw,font=\sffamily\Large\bfseries}]
s
  \node[main node] (2) {$\mathrm{L},\mathrm{TL}$};
  \node[main node] (1) [below right of=2] {$\mathrm{L}, \mathrm{null}$};
  \node[main node] (3) [right=1.4cm of 2] {$\mathrm{L},\mathrm{TR}$};
  \node[main node] (4) [below of = 1] {$\mathrm{END}$};

  \path[every node/.style={scale=0.5,font=\sffamily\small}]
    (1) edge node [left] {LS,0.85,-1} (2)
      edge node [right] {LS,0.15,-1} (3)
      edge node [text width=1.5cm, align=left] {OL,1,-100\\OR,1,10} (4)
    (2) edge [loop above] node {LS,0.85,-1} (2)
      edge [bend left] node {LS,0.15,-1} (3)
      edge [bend right] node [text width=1.5cm, left] {OL,1,-100\\OR,1,10} (4)
    (3) edge [loop above] node {LS,0.15,-1} (3)
      edge node {LS,0.85,-1} (2)
      edge [bend left] node [text width=1.5cm, right] {OL,1,-100\\OR,1,10} (4);
\end{tikzpicture}} & 
\subfloat[Tiger on the right]{
\begin{tikzpicture}[->,>=stealth',shorten >=1pt,auto,node distance=3cm,
thick,main node/.style={scale=0.51,circle,fill=white!20,draw,font=\sffamily\Large\bfseries}]

  \node[main node] (2) {$\mathrm{R},\mathrm{TL}$};
  \node[main node] (1) [below right of=2] {$\mathrm{R}, \mathrm{null}$};
  \node[main node] (3) [right=1.4cm of 2] {$\mathrm{R},\mathrm{TR}$};
  \node[main node] (4) [below of = 1] {$\mathrm{END}$};

  \path[every node/.style={scale=0.5,font=\sffamily\small}]
    (1) edge node [left] {LS,0.15,-1} (2)
      edge node [right] {LS,0.85,-1} (3)
      edge node [text width=1.5cm, align=left] {OL,1,10\\OR,1,-100} (4)
    (2) edge [loop above] node {LS,0.15,-1} (2)
      edge [bend left] node {LS,0.85,-1} (3)
      edge [bend right] node [text width=1.5cm, left] {OL,1,10\\OR,1,-100} (4)
    (3) edge [loop above] node {LS,0.85,-1} (3)
      edge node {LS,0.15,-1} (2)
      edge [bend left] node [text width=1.5cm, right] {OL,1,10\\OR,1,-100} (4);
\end{tikzpicture}}
\end{tabular}
\caption{A POMDP model for one-shot Tiger is transformed into a set of
  two MDPs. A node is labeled by a pair, representing a POMDP state and an
  observation. The start state has null observation and is labeled
  accordingly.
  The special  terminal state is labeled as ``END''.
  An edge is labeled by a triple, representing the action , the probability of
  reaching the next state with the action, and the reward. }
\label{fig:TigerMdl}
\end{figure}

A \hpmdp is \emph{equivalent} to a
set of \emph{Markov decision processes} (MDPs) indexed by a hidden parameter.
The key idea for the equivalence transformation is to 
combine a POMDP state and an observation to form an expanded MDP state, and
capture both POMDP state-transition uncertainty and observation
uncertainty in the MDP transition dynamics. In the one-shot
Tiger example, we form two MDPs indexed by the tiger
position, left (L) or right (R) (\figref{fig:TigerMdl}). An MDP state is a
pair, consisting of a POMDP state and an observation. For example, in the MDP
with the tiger on the left, we have $(\text{L}, \text{TL})$, which represents that the true tiger
position is L and the agent  receives the observation TL. If the agent
takes the action LS, with probability $0.15$, we transit to
the new state $(\text{L}, \text{TR})$ and receives observation TR.
See \secref{sec:hpmdp} for details of  the general construction.

The equivalence enables us to develop an online algorithm for
POMDP-lite through model-based Bayesian reinforcement learning (RL). 
If the hidden
parameter value were known, our problem would simply become  an MDP, which has
well-established algorithms. To gather information on the unknown hidden parameter,
the robot must explore.  It maintains a \emph{belief}, \ie,
a probability distribution over the hidden
parameter and follows the internal reward approach for model-based Bayesian
RL~\cite{kolter2009near,sorg2012variance},
which modifies the
MDP reward function in order to encourage exploration.
At each time step, the online algorithm solves an
internal reward MDP to choose an action and then updates the belief
to incorporate the new observation received.
Our algorithm is simple to
implement. It performs well on large-scale \hpmdp
tasks with up to $10^{20}$ states and outperforms the state-of-the-art
general-purpose POMDP algorithms. 
Furthermore, it is 
near-Bayesian-optimal under suitable conditions.

\section{Related Work}
\label{sec:related}
POMDP planning has a huge literature (see, \eg, \cite{KaeLit98,KurHsu08,Son71,SmiSim05,KurHsu08,SilVen10,SomYe13,SeiKur15} ). 
Our brief review focuses on  online search algorithms.  At each time step, an
online algorithm performs a look-ahead search and computes a best action for
the current belief only~\cite{RosPin08}.  After the robot executes the action,
the algorithm updates the belief based on the observation received.  The
process then repeats at the new belief for the next time step.
Online search algorithms scale up by focusing on the current belief only,
rather than all possible beliefs that the robot may encounter. Further, since
online algorithms recompute a best action from scratch at each step, they
naturally handle unexpected environment changes without additional overhead.
POMCP~\cite{SilVen10} and DESPOT~\cite{SomYe13} are the fastest online POMDP
algorithms available today.  Both employ the idea of \emph{sampling} future
contingencies.  POMCP performs Monte Carlo tree search (MCTS). It has low
overhead and scales up to very large POMDPs, but it has extremely poor
worst-case performance, because MCTS is sometimes overly greedy. DESPOT
samples a fixed number of future contingencies deterministically in advance and
performs heuristic search on the resulting search tree. This substantially
improves the worst-case performance bound. It is also more flexible and easily
incorporates domain knowledge. DESPOT has been successfully implemented for
real-time autonomous driving  in a crowd~\cite{BaiCai15}. It is  also a crucial
component in a system that won the \emph{Humanitarian Robotics and Automation
  Technology Challenge} (HRATC) 2015 on a demining task.

Instead of solving the general POMDP, we take a different approach and
identify a structural property that enables simpler and more efficient
algorithms through model-based Bayesian RL.  Like POMDP-lite, the \emph{mixed
  observability Markov decision process} (MOMDP)~\cite{ong2010planning} is
also a factored model. However, it does not place any restriction on partially
observable state variables. It is in fact equivalent to the general POMDP, as
every POMDP can be represented as a MOMDP and \textit{vice versa}.  The
\emph{hidden goal Markov decision process} (\hgmdp)~\cite{FerNat07} and the
\emph{hidden parameter Markov decision process} (\hipmdp)~\cite{DosKon13} are related
to \hpmdp. They both restrict partially observability to static hidden
variables. The work on \hgmdp relies on a myopic heuristic for planning, 
and it is unlikely to perform well on tasks that need exploration.
The work on \hipmdp focuses mainly on learning the hidden structure from data.

There are several approaches to  Bayesian RL~\cite{asmuth2009bayesian,poupart2006analytic,strehl2012incremental,kolter2009near}. The internal reward
approach is among the most successful. It is simple and performs well in
practice. 
Internal reward methods can be further divided into two main categories,
PAC-MDP and Bayesian optimal. PAC-MDP algorithms are optimal with respect to
the true MDP~\cref{kearns2002near, strehl2012incremental, sorg2012variance}.
They provide strong theoretical guarantee, but may suffer from over
exploration empirically.  Bayesian optimal algorithms are optimal with respect
to the optimal Bayesian policy.  They simply try to achieve high expected
total reward. In particular, the Bayesian Exploration Bonus
(BEB)~\cite{kolter2009near} algorithm achieves lower sample complexity than
the PAC-MDP algorithms.  However, BEB requires a Dirichlet prior on the hidden
parameters.  Our algorithm is inspired by BEB, but constructs the exploration
bonus differently. It allows arbitrary discrete prior, a very useful feature in
practice.

\section{\hpmdp}
\label{sec:hpmdp}

\subsection{Definition}
\hpmdp is a special class of \pomdp with a ``deterministic assumption'' on its partially observable variable, specifically, the partially observable variable in \hpmdp is static or has deterministic dynamic. Formally we introduce \hpmdp as a tuple \tuple{\stsph, \parsph, \actsph, \obssph, \traFh, \obsFh, \rewFh, \discount}, where \stsph is a set of fully observable states, \parsph is the hidden parameter which has finite number of possible values: $\parsph = \{\parh_1,\parh_2,...,\parh_N\}$, the state space is a cross product of fully observable states and hidden parameter: $\stsph \times \parsph = \{ \tuple{\sth_i, \parh_j} | \sth_i \in \stsph, \parh_j \in \parsph \}$. \actsph is a set of actions, \obssph is a set of observations. The transition function $\traFh(\sth, \parh, \acth, \sth', \parh') = P(\sth' | \parh, \sth, \acth) P(\parh' | \parh, \sth, \acth)$ for $\sth, \sth' \in \stsph, \parh, \parh' \in \parsph, \acth \in \actsph$ specifies the probability of reaching state $\tuple{\sth', \parh'}$ when the agent takes action \acth at state \tuple{\sth, \parh}, where $\parh' = \parh$ or $P(\parh' | \parh, \sth, \acth) = 1$ according to the ``deterministic assumption''. The observation function $\obsFh(\sth', \parh', \acth, \obsh) = P(\obsh | \parh', \sth', \acth)$ specifies  the probability of receiving observation \obsh after taking action \acth and reaching state $\tuple{\sth', \parh'}$. The reward function $\rewFh(\sth, \parh, \acth)$ specifies the reward received when the agent takes action \acth at state \tuple{\sth, \parh}. \discount is the discount factor. 

In \hpmdp, the state is unknown and the agent maintains a belief $b$, which is a probability distribution over the states. In each step, the agent takes an action $\acth$ and receives a new observation, the belief is updated according to Bayes' rule, $b' = \tau(b, a, \obsh)$. The solution to a \hpmdp is a policy \plcypi which maps belief states to actions, \ie, $\plcypi(b) = a$. The value of a policy \plcypi is the expected reward with respect to the initial belief $b_0$: $\pivfn(b_0) = \ev\big{[}\underset{t = 0}{\overset{\infty}{\sum}} \discount^t \rewFp(\sth_t, \parh_t, \acth_t) \big{]}$, where $\tuple{\sth_t, \parh_t}$ and $\acth_t$ denote the state and action at time $t$. An optimal policy has the highest value in all belief states, \ie, $\optvfn(b) \geq \pivfn(b), \forall \plcypi, b$, and the corresponding optimal value function satisfies Bellman's equation:
\vspace{-0.8em}
\begin{align*}
\optvfn(b) &= \underset{\acth}{\max}\bigg{\{} \underset{\sth \in \stsph, \parh \in \parsph}{\sum}b(\sth, \parh)\rewFh(\sth, \parh, \acth) \\ 
  &\qquad \qquad \qquad+ \discount \underset{\obsh \in \obssph}{\sum}P(\obsh | b, \acth) \optvfn \big{(}\tau(b, a, \obsh)\big{)} \bigg{\}}
\end{align*}

\subsection{Equivalent Transformation to a Set of MDPs}
In this section, we show an important property of \hpmdp model that it is equivalent to a collection of MDPs indexed by $\parm$. A \mdp model with parameter $\parm_i$ is a tuple $\tuple{\stspm, \actspm, \traFm, \rewFm, \discount, \parm_i}$, where $\stspm$ is a set of states, $\actspm$ is a set of actions, $\traFm(\parm_i, \stm, \actm, \stm')$ is the transition function, $\rewFm(\parm_i, \stm, \actm)$ is the reward function, $\discount$ is the discount factor. 

\begin{them_1}
\label{thm:hpmdp}
Let $\hpmdpM = \tuple{\stsph, \parsph, \actsph, \obssph, \traFh, \obsFh, \rewFh, \discount}$ be a \hpmdp model, where $\parsph = \{\parh_1,\parh_2,...,\parh_N\}$. It equals to a collection of MDPs indexed by $\parm$, $\collectionM = \underset{\parm_i \in \parsph}{\bigcup} \tuple{\stspm, \actspm, \traFm, \rewFm, \discount, \parm_i}$.
\end{them_1}

\tikzset{
    state/.style = {draw, font=\sffamily, circle, align=center, minimum size=8mm},
    hidden_parameter/.style = {draw, font=\sffamily, circle, align=center, minimum size=8mm},
    action/.style = {draw, font=\sffamily, diamond, align=center, minimum size=8mm},
    observation/.style = {draw, font=\sffamily, rectangle, align=center, minimum size=7mm},
    reward/.style = {draw, font=\sffamily, ellipse, align=center, minimum height=6mm, minimum width=9mm}
}
\begin{figure}
\centering
  \begin{tikzpicture}[scale = 0.73]
  \node at (-2.2,-1.5) {\hpmdp};
  \draw [dashed] (-4.6,-1.1) rectangle (-3.4,1.1);
  \node[state] (s_pomdp) at (-4,0.55) {$\sth$};
  \node[hidden_parameter] (hi_pomdp) at (-4,-0.55) {$\parh$};
  \draw [dashed] (-1.1,-1.1) rectangle (0.1,1.1);
  \node[state] (s_pomdp_1) at (-0.5,0.55) {$\sth'$};
  \node[hidden_parameter] (hi_pomdp_1) at (-0.5,-0.55) {$\parh'$};
  \node[action] (a_pomdp) at (-2.5, 2.5) {$a$};
  \node[observation] (obs_pomdp) at (-0.5,2.5) {$o$};
  \node[reward] (r_pomdp) at (-5,2.5) {$R$};
  \draw [->, draw=blue] (-3.4,0) to (-1.1,0);
  \draw [->, draw=blue] (a_pomdp) to[out=270,in=180] (-1.1,0.85);
  \draw [->, draw=blue] (-0.5,1.1) to (obs_pomdp);
  \draw [->, draw=blue] (a_pomdp) to (obs_pomdp);
  \draw [->, draw=blue] (a_pomdp) to (r_pomdp);
  \draw [->, draw=blue] (-4,1.1) to[in=270,out=90] (r_pomdp);

  \node at (3.7,-1.5) {$MDP_{\parh_i}$};
  \draw [dashed] (1.4,-1.1) rectangle (2.6,1.1);
  \node[state] (s_hpmdp) at (2,0.55) {$\sth$};
  \node[observation] (obs_hpmdp) at (2,-0.55) {$\obsm$};
  \draw [dashed] (4.9,-1.1) rectangle (6.1, 1.1);
  \node[state] (s_hpmdp_1) at (5.5,0.55) {$\sth'$};
  \node[observation] (obs_hpmdp_1) at (5.5,-0.55) {$\obsm'$};
  \node[action] (a_hpmdp) at (3.5, 2.5) {$a$};
  \node[hidden_parameter] (hi_hpmdp) at (5.5,2.5) {$\parh_i$};
  \node[reward] (r_hpmdp) at (1,2.5) {$R$};
  \draw [->, draw=blue] (2.6,0)  to  (4.9,0) ;
  \draw [->, draw=blue] (a_hpmdp) to[out=270,in=180] (4.9,0.8);
  \draw [->, draw=blue] (hi_hpmdp) to (5.5,1.1);
  \draw [->, draw=blue] (a_hpmdp) to (r_hpmdp);
  \draw [->, draw=blue] (2,1.1) to[in=270,out=90] (r_hpmdp);
  \draw [->, draw=blue] (hi_hpmdp) to[in=30,out=150] (r_hpmdp);
\end{tikzpicture}
\caption{Graphic model for \hpmdp (left) and the MDP model with parameter $\parm_i$ (right).}
\vspace{-0.5em}
\label{fig:graphic_model}
\end{figure}
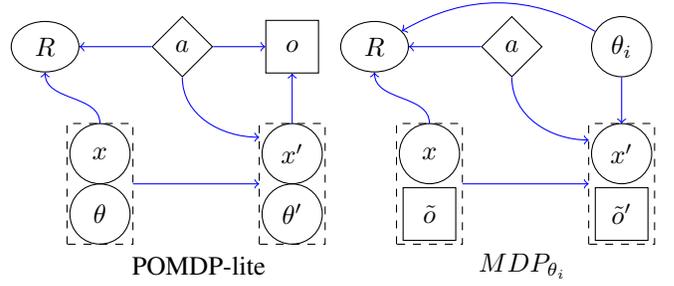

\begin{proof}[Proof of~\thmref{thm:hpmdp}]
To show the equivalence between $\hpmdpM$ and $\collectionM$, we first reduce $\collectionM$ to $\hpmdpM$. This direction is easy, we can simply treat $\parm_i$ as part of the state in a \hpmdp model, the remaining part can become part of a \hpmdp model without change. The more interesting direction is to reduce $\hpmdpM$ to $\collectionM$.

Let's first consider the case when the value of $\parm$ remains constant. Given $\parh = \parh_i$, the \hpmdp model becomes a \mdp model with parameter $\parm_i$, which consists of the following elements: \tuple{\stspm, \actspm, \traFm, \rewFm, \discount, \parm_i}. Where \stspm is the state space: $\stspm = \stsph \times \obsspm$, $\obsspm = \obssph \cup \{\nul\}$, in which \nul simply means no observation received. \actspm is the set of actions which is identical to the actions in the \hpmdp model. The transition function $\traFm(\parh_i, \stm, \actm, \stm') = P(\sth', \obsm' | \parh_i, \sth, \obsm, \actm) = P(\obsm' | \parh_i, \sth', \actm) P(\sth' | \parh_i, \sth, \actm)$ specifies the probability of reaching state $\stm'$ after taking action $\actm$ at state \stm, where $P(\sth' | \parh_i, \sth, \actm)$ and $P(\obsm' | \parh_i, \sth', \actm)$ are the transition and observation probability function in the \hpmdp model. The reward function $\rewFm(\parm_i, \stm, \actm) = \rewFh(\parh_i, \sth, \actm)$ specifies the reward received when the agent takes action \actm at state \stm. \discount is the discount factor. The graphic model in~\figref{fig:graphic_model} shows the relationship between \hpmdp model and the corresponding \mdp model with parameter $\parm_i$. Since the hidden parameter has finite number of values, a \hpmdp can be reduced to a collection of MDPs indexed by $\parm$. 

Then, we show that a simple extension allows us to handle the case when the value of the hidden variable changes deterministically. The key intuition here is that the deterministic dynamic of hidden variable does not introduce additional uncertainties into the model, \ie, given the initial value of hidden variable $\parm^0$, and the history up to any time step $t$, $\his_t = \{\sth_0, \acth_0, \sth_1, \acth_1, ... , \sth_{t-1}, \acth_{t-1}\}$, the value of hidden variable can be predicted using a deterministic function, $\parm^{t} = \plyh(\parm^0, \his_t)$. Thus given the initial value of hidden variable $\parm^0 = \parm^0_i$ and a deterministic function $\plyh$, a \hpmdp model can be reduced to a \mdp model: \tuple{\stspm, \actspm, \traFm, \rewFm, \discount, \parm^0_i}. Compared with the static case, the state here is further augmented by the history $\his_t$ \ie, $\stm_t = \tuple{\sth_t, \obsm_t, \his_t}$. The value of $\parm^t$ was fully captured by $\stm_t$, $\parm^0_i$ and $\plyh$, $\parm^t = \plyh(\parm^0_i, \stm_t)$. The rest of the \mdp model is similar to the static case. In particular, the set of actions $\actspm$ is identical to the \hpmdp model, the transition function is $\traFm(\parm^0_i, \stm_t, \actm_t, \stm_{t+1}) = P(\obsm_{t+1} | \parm^{t+1}, \sth_{t+1}, \actm_t) P(\sth_{t+1} | \parm^t, \sth_t, \acth_t)$, the reward function is $\rewFm(\parm^0_i, \stm_t, \actm_t) = \rewFh(\parh^t, \sth_t, \actm_t)$, \discount is the discount factor. Since $\parm^0$ has finite number of values, a \hpmdp can be reduced to a collection of MDPs indexed by $\parm^0$.
\end{proof}

\subsection{Algorithm}
In this part, we present an efficient model based \brl algorithm for \hpmdp. The solution to the \brl problem is a policy \plcypi, which maps a tuple \tuple{belief, state} to actions, \ie, $\actm = \plcypi(b, s)$. The value of a policy $\plcypi$ for a belief $b$ and state $\stm$ is given by Bellman's equation
\begin{align*}
\pivfn(b, s) &= R(b, s, a) + \discount \underset{b', s'}{\sum}P(b', s' | b, s, a)\pivfn(b', s') \\ 
  &= R(b, s, a) + \discount \underset{s'}{\sum}P(s' | b, s, a)\pivfn(b', s')
\end{align*}

Where $\actm = \plcypi(\stm, b)$, $R(b, s, a) = \underset{\parm_i}{\sum}R(\parm_i, s , a)b(\parm_i)$ is the mean reward function, $P(s' | b, s, a) = \underset{\parm_i}{\sum}\traFm(\parm_i, s, a, s')b(\parm_i)$ is the mean transition function. The second line follows from the fact that belief update is deterministic, \ie, $P(b' | b, \actm, \obsh) = 1$. The optimal Bayesian value function is
\begin{equation}
\begin{split}
\optvfn(b, s) &= \underset{a}{\max}\bigg{\{} R(b, s, a) + \discount \underset{s'}{\sum}P(s' | b, s, a)\optvfn(b', s') \bigg{\}}
\end{split}
\label{equ:optBayesV} 
\end{equation}

$a = \optplcypi(b, s)$ is the optimal action that maximizes the right hand size. Like the optimal policy in the original \hpmdp problem, the optimal Bayesian policy chooses actions not only based on how they affect the next state but also based on how they affect the next belief. 

However, the optimal Bayesian policy is computationally intractable. Instead of exploring by updating the belief each step, our algorithm explores by explicitly modify the reward function. In other words, each state action pair will have a reward bonus based on how much information it can reveal. The reward bonus used by our algorithm is motivated by the observation that the belief gets updated whenever some information about the hidden parameter has been revealed, thus we use the \Lone divergence between two beliefs to measure the amount of information gain. The reward bonus is defined formally as follows:

\begin{defn}
\label{def:rb}
When the belief is updated from $b_i$ to $b_j$, we measure the information gain by the \Lone divergence between $b_i$ and $b_j$, \ie, $\|b_j - b_i\|_1 = \underset{\mpr_i \in \Theta}{\sum}\big{|}b_{j}(\mpr_i)-b_i(\mpr_i)\big{|}$. Based on it, the reward bonus for \tuple{s, a} is defined as the expected $\Lone$ divergence between current belief $b$ and next belief $b_{s'}$:
\begin{align*}
\rebonus(b, s, a) = \disRB \underset{s'}{\ev}\big{[}\|b_{s'} - b\|_1\big{]} = \disRB \underset{s'}{\sum}P(s' | b, s, a)\|b_{s'} - b\|_1
\end{align*}
where \disRB is the constant tuning factor, $b_{s'}$ is the updated belief after observing $s'$.
\end{defn}

At each time step, our algorithm solves an internal reward MDP, $M = \tuple{\stspim, \actspim, \traFim, \rewFim, \discount, b}$. It chooses action greedily with respect to the following value function
\begin{equation}
\label{eq:Value}
\begin{split}
\optvfni(b, s) = \underset{a}{\max} \bigg{\{} \rewFim(b, s, a) + \discount \underset{s'}{\sum}P(s' | b, s, a)\optvfni(b, s') \bigg{\}}
\end{split}
\end{equation}

Where $\rewFim(b, s, a) = \rewFm(b, s, a) + \rebonus(b, s, a)$, in which $\rebonus(b, s, a)$ is the reward bonus term and it is defined in~\defref{def:rb}. Other parts are identical to~\equref{equ:optBayesV} except that belief $b$ is not updated in this equation. We can solve it using the standard Value Iteration algorithms, which have time complexity of $\bigO(|A||X|^2|\parsph|^2)$. In this work, we are more interested in problems with large state space, thus we are using \uct~\cite{kocsis2006bandit}, an \online \mdp solver, to achieve \online performance. Details of our algorithm is described in~\algmref{alg:bebd}, in which $\qfni^*(b_t,s_t,a) = \rewFim(b_t, s_t, a) + \discount \underset{s_{t+1}}{\sum}P(s_{t+1} | b_t, s_t, a)\optvfni(b_t, s_{t+1})$. 
\vspace{-0.8em}
\begin{algorithm}
\caption{}
\begin{algorithmic}[1]
\STATE $t \leftarrow 0, b_t \leftarrow b_0, s_t \leftarrow s_0$ \COMMENT{initialize the values}
\STATE $T \leftarrow $ Maximum Steps
\WHILE{not end and $t < T$}
\STATE $a_t \leftarrow \argmax_a \qfni^*(b_t,s_t,a)$ \COMMENT{greedily choose an action}
\STATE $(r_t,s_{t+1}) \leftarrow ExecuteAction(a_t)$ 
\STATE $b_{t+1} \leftarrow UpdateBelief(b_t,s_{t+1})$ \COMMENT{update the belief}
\STATE $t = t+1$
\ENDWHILE
\end{algorithmic}
\label{alg:bebd}
\end{algorithm}
\vspace{-0.8em}

\section{Analysis}
\label{sec:alg}
Although our algorithm is a greedy algorithm, it actually performs sub-optimally only in a polynomial number of time steps. In this section, we present some theoretical results to bound the sample complexity of our algorithm. Unless stated otherwise, the proof of the lemmas in this section are deferred to the appendix. For a clean analysis, we assume the reward function is bounded in $[0, 1]$.

\subsection{Sample Complexity}
The sample complexity measures the number of samples needed for an algorithm to perform optimally. We start with a definition of sample complexity on a state action pair \tuple{s, a}.
\begin{defn}
\label{def:sampComplexity}
Given the initial belief $b_0$, target accuracy $\sampAccu$, reward bonus tuning factor $\disRB$, we define the sample complexity function of \tuple{s, a} as: $\sampCplx(s, a) = f(b_0, s, a, \sampAccu, \disRB)$, such that if \tuple{s, a} has been visited more than $\sampCplx(s, a)$ times, starting from belief $b_0$, the corresponding reward bonus of visiting \tuple{s, a} at the new belief $b'$ is less than \sampAccu, \ie, $\rebonus(b', s, a) < \sampAccu$. We declare \tuple{s, a} as known if it has been sampled more than $\sampCplx(s, a)$ times, and cease to update the belief for sampling known state action pairs.
\end{defn} 

The following is an assumption for our theorem to hold true in general. The assumption essentially says that the earlier you try a state-action pair, the more information you can gain from it. We give a concrete example to illustrate our assumption in~\lemref{lem:dirichelet}.
\begin{assum}
The reward bonus monotonically decreases for all state action pairs \tuple{s, a} and timesteps $t$, \ie, $\rebonus(b_i, s, a) \geq \rebonus(b_{i+1}, s, a)$. 
\label{assum:monotonicityofrb}
\end{assum}

Now, we present our central theoretical result, which bounds the sample complexity of our algorithm with respect to the optimal Bayesian policy.

\begin{them_1}
Let the sample complexity of \tuple{s,a} be $\sampCplx(s,a) = f(b_0, s, a, \sampAccu, \disRB)$, where $\beta = O\big{(}\frac{|S|^2|A|}{(1 - \discount)^2}\big{)}$, $\sampAccu = \epsilon(1 - \discount)$. Let $\optply_t$ denote the policy followed by the algorithm at time $t$, and let $s_t$, $b_t$ be the corresponding state and belief. Then with probability at least $1-\delta$, $ V^{\optply_t}(b_t, s_t) \geq V^*(b_t,s_t) - 4 \epsilon $, \ie, the algorithm is $4 \epsilon$-close to the optimal Bayesian policy, for all but
\begin{center}
$m = O\bigg{(} \frac{\sum_{s,a}\sampCplx(s, a)}{\epsilon (1 - \discount)^3} \ln \frac{1}{\sigma} \ln \frac{1}{\epsilon(1 - \discount)} \bigg{)}$
\end{center}
time steps.
\label{them:theorem1}
\end{them_1}

In other words, our algorithm acts sub-optimally for only a polynomial number of time steps. 

Although our algorithm was primary designed for Discrete prior, \thmref{them:theorem1} can be applied to many prior distributions. We apply it to two simple special classes, which we can provide concrete sample complexity bound. First, we show that in the case of independent Dirichlet prior, the reward bonus monotonically decreases and the sample complexity of a \tuple{s, a} pair can be bounded by a polynomial function. This case also satisfies~\asmref{assum:monotonicityofrb}.

\begin{lemma}[Independent Dirichlet Prior]
\label{lem:dirichelet}
Let $n(s,a)$ be the number of times \tuple{s, a} has been visited. For a known reward function and an independent Dirichlet prior over the transition dynamics for each \tuple{s, a} pair, $\rebonus(b, s, a)$ monotonically decreases at the rate of $O\big{(}1/n(s, a)\big{)}$, and the sample complexity function $f(b_0, s, a, \sampAccu, \disRB) = O \big{(} \frac{|S|^2|A|}{\sampAccu (1 - \discount)^2} \big{)}$.
\end{lemma}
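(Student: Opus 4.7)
The plan is to exploit Dirichlet--multinomial conjugacy so that both $\|b_{\hat s'}-b\|_1$ and the bonus $RB(b,s,a)$ admit closed forms, from which the $O(1/n(s,a))$ decay and the sample complexity bound fall out directly. Fix a pair $(s,a)$ and let its prior on the transition vector $T(\theta,s,a,\cdot)$ be $\mathrm{Dir}(\alpha)$ with total mass $\alpha_0$. After $n(s,a)$ visits with empirical counts $n(s,a,s')$, write $N=\alpha_0+n(s,a)$ and $c_{s'}=\alpha_{s'}+n(s,a,s')$, so the posterior predictive is $P(s'\mid b,s,a)=c_{s'}/N$. Because the prior factorizes across $(s,a)$ pairs, a visit to $(s,a)$ leaves the belief components of all other pairs untouched, which is the observation I will use to reduce the $L_1$ divergence in Definition~\ref{def:rb} to a computation on the single updated predictive marginal.

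First I would derive the one-step update. Observing a transition to $\hat s'$ increments $c_{\hat s'}$ and $N$ each by one, and a brief algebraic manipulation yields
\begin{equation*}
P(s'\mid b_{\hat s'},s,a) - P(s'\mid b,s,a) \;=\; \frac{\mathbb{1}[s'=\hat s'] - P(s'\mid b,s,a)}{N+1}.
\end{equation*}
Summing absolute values over $s'$ collapses neatly: the signed differences contribute mass $1-P(\hat s'\mid b,s,a)$ on the index $\hat s'$ and the same total on the remaining indices, so $\|b_{\hat s'}-b\|_1 = 2(1-P(\hat s'\mid b,s,a))/(N+1)$.

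Next I would plug this back into Definition~\ref{def:rb} and take expectation over $\hat s'$ under the posterior predictive. With $P_{s'}=P(s'\mid b,s,a)$,
\begin{equation*}
RB(b,s,a) \;=\; \frac{2\beta}{N+1}\sum_{s'} P_{s'}(1-P_{s'}) \;\le\; \frac{2\beta}{N+1},
\end{equation*}
so $RB(b,s,a)=O(\beta/n(s,a))$. The upper bound $2\beta/(N+1)$ is strictly decreasing in $N$ and drives every subsequent inequality, which is what I would use to satisfy Assumption~\ref{assum:monotonicityofrb} (the $RB$ of any other pair is unchanged across the update by the product-Dirichlet independence). Solving $2\beta/(N+1)<\kappa$ gives $n(s,a)=O(\beta/\kappa)$, and substituting $\beta=O(|S|^2|A|/(1-\gamma)^2)$ delivers the stated $f(b_0,s,a,\kappa,\beta)=O(|S|^2|A|/(\kappa(1-\gamma)^2))$.

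The main obstacle is not the algebra but the preliminary bookkeeping in the first paragraph: Definition~\ref{def:rb} is stated for a discrete $\Theta$, whereas an independent Dirichlet prior makes $\Theta$ a product of continuous simplices. The cleanest fix, which keeps Definition~\ref{def:rb} meaningful and produces the closed form, is to identify each belief with its posterior predictive (equivalently, its posterior mean transition vector) on each $(s,a)$; once that identification is made, conjugacy reduces the rest to the one-line computation above.
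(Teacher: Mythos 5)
Your proposal is correct and follows essentially the same route as the paper's own proof: both exploit Dirichlet--multinomial conjugacy to write the one-step change in the posterior predictive in closed form, show the expected $L_1$ divergence is at most $2\beta/(n(s,a)+1)$, and then solve $2\beta/(n+1)<\kappa$ with $\beta=O\big(|S|^2|A|/(1-\gamma)^2\big)$ to get the sample complexity. Your version is marginally tidier --- it keeps the exact factor $1-\sum_{s'}P_{s'}^2$, distinguishes prior pseudocounts from visit counts, and flags the discrete-versus-continuous $\Theta$ issue --- but the argument is the same.
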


The strength of our algorithm lies in its ability to handle Discrete prior. We use a very simple example (Discrete prior over unknown deterministic MDPs) to show this advantage, and we state it in the following lemma. The intuition behind this lemma is quite simple, after sampling a state action pair, the agent will know its effect without noise.
\begin{lemma}[Discrete prior over Deterministic MDPs]
\label{lem:accurateSense}
Let $b_0$ be a Discrete prior over deterministic MDPs, the sample complexity function $f(b_0, s, a, \sampAccu, \disRB) \leq 1$.
\end{lemma}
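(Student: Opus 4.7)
\textbf{Proof plan for Lemma~\ref{lem:accurateSense}.} The approach is to show that a single visit to any state–action pair $\tuple{s,a}$ is enough to eliminate all uncertainty about its outcome, so that the Bayesian update on any subsequent visit is the identity, making the reward bonus exactly zero. Since the definition of sample complexity in \defref{def:sampComplexity} asks for the smallest visit count after which $\rebonus(b',s,a)<\kappa$, any positive $\kappa$ will then be satisfied by $\sampCplx(s,a)\le 1$.

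First, I would unpack the setting. The prior $b_0$ is discrete over a finite family $\{\parm_i\}$, each indexing a deterministic MDP; so for each $\parm_i$ there is a unique next state $s'_i = T(\parm_i,s,a)$. After one visit to $\tuple{s,a}$, the agent observes the actual next state $s'$ produced by the true parameter $\parm^\star$, and Bayes' rule zeroes out every $\parm_i$ with $s'_i\ne s'$: the posterior $b_{s'}$ is supported on $\{\parm_i : T(\parm_i,s,a)=s'\}$.

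Next, I would argue the key invariant. Any later belief $b'$ reached after this first visit is obtained from $b_{s'}$ by further conditioning on observations, so $\mathrm{supp}(b')\subseteq\mathrm{supp}(b_{s'})$. In particular, every $\parm_i$ in $\mathrm{supp}(b')$ assigns probability one to the same next state $s'$ under $\tuple{s,a}$. Hence the mean transition model satisfies $P(s'\mid b',s,a)=1$, and the Bayesian update after observing $s'$ leaves $b'$ unchanged: $b'_{s'}=b'$. Plugging into \defref{def:rb} gives
\begin{equation*}
\rebonus(b',s,a)=\disRB\,P(s'\mid b',s,a)\,\|b'_{s'}-b'\|_1 = 0 < \sampAccu.
\end{equation*}

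Therefore, by \defref{def:sampComplexity}, one visit suffices to render $\tuple{s,a}$ known, i.e.\ $f(b_0,s,a,\sampAccu,\disRB)\le 1$. I do not foresee a real obstacle here; the only subtlety worth stating carefully is that the ``further conditioning'' preserving $\mathrm{supp}(b')\subseteq\mathrm{supp}(b_{s'})$ covers updates from visits to other state–action pairs as well, which is immediate because Bayesian updates can only shrink the support of a discrete prior.
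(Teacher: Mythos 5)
Your argument is correct and formalizes exactly the reasoning the paper intends: the paper gives no formal proof of this lemma in the appendix, only the one-line intuition that after sampling a state--action pair once the agent knows its effect without noise, which is precisely the support-collapse argument you spell out (posterior concentrates on the parameters predicting the observed successor, so $P(s'\mid b',s,a)=1$, the update is the identity, and the bonus is $0<\sampAccu$). The detail you rightly flag --- that later Bayesian updates from visits to other pairs can only shrink the support of a discrete belief, so the bonus remains zero on every subsequent visit --- is the only step the paper leaves implicit, and you handle it correctly.
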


\subsection{Proof of~\thmref{them:theorem1}}
The key intuition for us to prove our algorithm quickly achieves near-optimality is that at each time step our algorithm is $\epsilon$-optimistic with respect to the Bayesian policy, and the value of optimism decays to zero given enough samples.

The proof for~\thmref{them:theorem1} follows the standard arguments from previous \pacmdp results. We first show that $V^{\optply_t}(b_t, s_t)$ is close to the value of acting according to the optimal Bayesian policy, assuming the probability of escaping the known state-action set is small. Then we use the Hoeffding bound to show that this ``escaping probability'' can be large only for a polynomial number of time steps.

We begin our proof with the following lemmas. Our first lemma essentially says that if we solve the internal reward MDP using the current mean of belief state with an additional exploration bonus in~\defref{def:rb}, this will lead to a value function which is $\epsilon$-\emph{optimistic} to the Bayesian policy. 

\begin{lemma}[Optimistic]
\label{lem:upBound}
Let $\optvfni(b, s)$ be the value function in our algorithm, $\optvfn(b, s)$ be the value function in Bayesian policy. if $\beta = O\big{(}\frac{|S|^2|A|}{(1 - \disvfn)^2}\big{)}$, then $\forall s$, $\optvfni(b, s) \geq \optvfn(b, s) - \epsilon$.
\end{lemma}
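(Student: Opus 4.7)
The plan is to adapt the optimism argument of Kolter and Ng~\cite{kolter2009near} for BEB to the $\Lone$-divergence bonus used here. I would start by writing the Bellman recursions for $\optvfn$ and $\optvfni$ side by side. Both share the same mean reward $\rewFm(b,s,a)$ and mean transition $P(s'\mid b,s,a)$, so the only structural difference is that $\optvfn$ evaluates its continuation at the Bayes-updated belief $b_{s'}$, whereas $\optvfni$ evaluates its continuation at the unchanged belief $b$ and collects the extra bonus $\rebonus(b,s,a)=\disRB\,\ev[\|b_{s'}-b\|_1]$.

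Next, I would pick any state $s$ and the Bayes-optimal action $a=\optplcypi(b,s)$, subtract the two Bellman equations, and add and subtract $\optvfn(b,s')$ inside the expectation. This gives a recurrence of the form
\[
\optvfn(b,s)-\optvfni(b,s)\leq \discount\,\ev_{s'}\!\bigl[\optvfn(b_{s'},s')-\optvfn(b,s')\bigr] + \discount\,\ev_{s'}\!\bigl[\optvfn(b,s')-\optvfni(b,s')\bigr] - \rebonus(b,s,a).
\]
I would then bound the first expectation by a Lipschitz estimate for $\optvfn$ with respect to the $\Lone$ distance on beliefs, namely $|\optvfn(b',s)-\optvfn(b,s)|\leq L\,\|b'-b\|_1$ for some $L=\bigO\!\bigl(|S||A|/(1-\discount)^2\bigr)$. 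This should come from iterating the Bellman operator and using linearity of $\rewFm$ and $P(\cdot\mid b,\cdot,\cdot)$ in $b$, the bounded reward range $[0,1]$, and the $\discount$-contraction of the operator. Choosing $\disRB\geq\discount L$, which matches the claimed $\disRB=\bigO\!\bigl(|S|^2|A|/(1-\discount)^2\bigr)$, cancels the first and third terms, leaving a pure contraction $\optvfn-\optvfni\leq \discount\,\ev_{s'}[\optvfn-\optvfni]$. Taking a supremum over $s$ and absorbing an $\epsilon$ worth of slack (to accommodate the approximate \uct solve of the internal MDP) yields the claim $\optvfni(b,s)\geq\optvfn(b,s)-\epsilon$.

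The hard part will be the Lipschitz estimate with the advertised constant. Belief perturbations cascade through the Bayesian Bellman recursion: a small change in $b$ alters the immediate mean reward, the immediate transition, and every subsequent posterior along the trajectory. Controlling this cascade requires an induction on horizon that separately tracks contributions from the reward (easily bounded by $|\rewFm(b',s,a)-\rewFm(b,s,a)|\leq\|b'-b\|_1$) and from the transition, where a perturbation can propagate through any of the $|S||A|$ state-action pairs reachable from $(b,s)$. Because the statement does not assume a Dirichlet prior, I cannot import the conjugacy-based calculations of BEB; the bound must come purely from the $\discount$-contraction and the worst-case branching of the belief trajectory, which is precisely where the extra $|S||A|$ factor enters and ultimately produces the $|S|^2|A|/(1-\discount)^2$ scaling of $\disRB$.
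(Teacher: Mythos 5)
Your overall architecture is genuinely different from the paper's. The paper never establishes or uses Lipschitz continuity of $\optvfn$ in the belief. Instead it truncates to a horizon $T=\frac{1}{1-\discount}\ln\frac{1}{\epsilon(1-\discount)}$ (which is where its $\epsilon$ comes from, not from the approximate \uct solve), defines the cumulative regret of \emph{never} updating the belief along a $T$-step path (\defref{def:RegretOfPolicy}), bounds $\|b_i-b_0\|_1$ by the telescoping sum $\sum_{l<i}\|b_{l+1}-b_l\|_1$ of one-step changes, and invokes \asmref{assum:monotonicityofrb} together with \lemref{lem:rbUpperBound} to dominate that accumulated model error by the accumulated bonus; this trajectory-level accounting is what produces the $|S|^2|A|$ and the extra $1/(1-\discount)$ factors. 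Your per-step absorption instead closes a contraction at a \emph{fixed} belief $b$, needs no monotonicity assumption, and would deliver exact optimism with no horizon truncation --- a cleaner and slightly stronger conclusion, if the key estimate were in place.

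The genuine gap is in the one step you defer: the Lipschitz bound $|\optvfn(b',s)-\optvfn(b,s)|\leq L\|b'-b\|_1$. The route you sketch --- induction on horizon through the Bellman recursion, paying a ``branching'' factor of $|S||A|$ --- does not close, and the obstacle is not branching. It is that the Bayes update $b\mapsto\tau(b,a,s')$ is not $\Lone$-nonexpansive: when the normalizer $P(s'\mid b,a)$ is small, two nearby priors can have posteriors that are far apart in $\Lone$, so the inductive term involving $|\optvfn_{k}(b'_{s'},s')-\optvfn_{k}(b_{s'},s')|$ cannot be controlled by $\mathrm{Lip}(\optvfn_{k})\cdot C\,\|b'-b\|_1$ for any constant $C$ independent of $b$. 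The standard repair is to bypass the recursion entirely and use the piecewise-linear-convex representation $\optvfn(b,s)=\sup_{\pi}\sum_{\parh}b(\parh)V^{\pi}(\parh,s)$ with $V^{\pi}(\parh,s)\in[0,\frac{1}{1-\discount}]$, which yields $L=\frac{1}{1-\discount}$ immediately; equivalently, the one-step backup $\sum_{s'}P(s'\mid b,s,a)\optvfn(b_{s'},s')$ is linear in $b$ because the posterior's normalizer cancels against $P(s'\mid b,s,a)$, so the cascade you worry about never starts. With that substitution your argument goes through and in fact requires only $\disRB\geq\frac{\discount}{1-\discount}$, weaker than the $O\big(\frac{|S|^2|A|}{(1-\discount)^2}\big)$ in the lemma statement. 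As written, however, the proposal asserts the load-bearing estimate while proposing a method that would fail to prove it.
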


The following definition is a generalization of the ``known state-action MDP'' in~\cref{strehl2012incremental} to Bayesian settings. It is an \mdp whose dynamics (transition function and reward function) are equal to the mean \mdp for \tuple{s, a} pairs in $K$ (known set). For other \tuple{s, a} pairs, the value of taking those \tuple{s, a} pairs in $M_K$ is equal to the current $\qfni$ value estimate.

\begin{defn}
\label{def:knownstateMDP}
Given current belief is $b$, a set of $\qfni$ value estimate for each \tuple{s, a} pair, \ie, $\qfni(b, \stim, \actim) = \rewFim(b, s, a) + \discount \underset{s'}{\sum}\traFim(b, s, a, s')\optvfni(b, s')$,  and a set $K$ of known \tuple{s, a} pairs, \ie, $\rebonus(b, s, a) < \epsilon(1 - \discount)$. We define the \knownMK, $M_K = \tuple{\stspim \cup \{s_0\}, \actspim, \traFim_K, \rewFm_K, \discount}$, as follows. $s_0$ is an additional state, under all actions from $s_0$ the agent returned to $s_0$ with probability $1$ and received reward $0$. For all $\tuple{s, a} \in K$, $\rewFm_K(s, a) = \rewFm(b, s, a)$ and $\traFm_K(s, a, s') = \traFm(b, s, a, s')$. For all $\tuple{s, a} \notin K$, $\rewFm_K(s, a) = \qfni(b, s, a)$ and $\traFm_K(s, a, s_0) = 1$.
\end{defn}

Our final lemma shows that the internal reward \mdp and the known state-action \mdp have low error in the set of known \tuple{s, a} pairs. 

\begin{lemma}[Accuracy]
\label{lem:accuracy}
Fix the history to the time step $t$, let $b_t$ be the belief, $s_t$ be the state, $K_t$ be the set of known \tuple{s, a} pairs, $M_{K_t}$ be the \knownMK, $\plcypi_t$ be the greedy policy with respect to current belief $b_t$, \ie, $\plcypi_t = \argmax_a \qfni(b_t, s, a)$. Then $\vfni^{\plcypi_t}(b_t, s_t) - \vfn_{M_{K_t}}^{\plcypi_t}(b_t, s_t) \leq \epsilon$.
\end{lemma}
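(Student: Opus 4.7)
The plan is to compare $\vfni^{\plcypi_t}(b_t, s_t)$ and $V_{M_{K_t}}^{\plcypi_t}(b_t, s_t)$ by a simulation-lemma-style argument, exploiting the fact that both quantities are the values of the same policy $\plcypi_t$ in two closely related MDPs whose dynamics are frozen at the current belief $b_t$. I would define the pointwise residual $\Delta(s) := \vfni^{\plcypi_t}(b_t, s) - V_{M_{K_t}}^{\plcypi_t}(b_t, s)$ and bound it by examining the two Bellman equations, splitting on whether $(s, \plcypi_t(s))$ lies in the known set $K_t$.

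First I would observe that on the known set, the two MDPs share the same mean transition function, and their rewards differ only by the bonus $\rebonus(b_t, s, \plcypi_t(s))$, which is at most $\epsilon(1 - \discount)$ by the very definition of $K_t$ in \defref{def:knownstateMDP}. Second, for $(s, \plcypi_t(s)) \notin K_t$, the known state-action MDP pays the one-shot reward $\qfni(b_t, s, \plcypi_t(s))$ and then absorbs into $s_0$ with zero future reward. Since $\plcypi_t$ is greedy with respect to $\qfni(b_t, s, \cdot)$, it is the optimal policy of the fixed-belief internal reward MDP, so $\vfni^{\plcypi_t}(b_t, s) = \optvfni(b_t, s) = \qfni(b_t, s, \plcypi_t(s))$; hence $\Delta(s) = 0$ at every state where the greedy action is unknown.

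Third, at states with $(s, \plcypi_t(s)) \in K_t$, subtracting the two Bellman equations yields the recursion
\[
\Delta(s) \;\leq\; \rebonus(b_t, s, \plcypi_t(s)) \;+\; \discount \sum_{s'} P(s' \,\vert\, b_t, s, \plcypi_t(s))\, \Delta(s'),
\]
with the additive term bounded by $\epsilon(1 - \discount)$ and with $\Delta \equiv 0$ on the complement of $K_t$. Unrolling along a trajectory of $\plcypi_t$ starting from $s_t$ (contributions are killed at the first step that leaves $K_t$) gives a geometric sum bounded by $\epsilon(1 - \discount) \sum_{k \geq 0} \discount^k = \epsilon$, which is the desired conclusion.

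The step I expect to be the main obstacle is the clean treatment of the unknown-state boundary: one must argue carefully that the one-shot reward $\qfni$ used in the construction of $M_{K_t}$ coincides exactly with the continuation value $\vfni^{\plcypi_t}$ in the internal reward MDP, which requires invoking both the greedy choice of $\plcypi_t$ and the fact that in the fixed-belief internal reward MDP the greedy policy attains $\optvfni$. A minor technicality is to make the unrolling rigorous; this can be handled either by a finite-horizon induction (followed by a $\discount^H \to 0$ limit) or by introducing the stopping time of the first escape from $K_t$ and applying tower properties of conditional expectation.
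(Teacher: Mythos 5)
Your proposal is correct and follows essentially the same route as the paper's proof: both arguments reduce the gap to the accumulated reward bonus on known pairs up to the first escape from $K_t$ (each term at most $\epsilon(1-\discount)$, summing to $\epsilon$), and both rely on the construction of $M_{K_t}$ together with the greediness of $\plcypi_t$ to make the two values coincide exactly at the first unknown pair via $\qfni(b_t,s,\plcypi_t(s))$. The only difference is presentational: you phrase it as a Bellman-residual recursion on $\Delta(s)$, whereas the paper unrolls an explicit trajectory, and your stopping-time/induction remark in fact makes the expectation over stochastic trajectories more rigorous than the paper's sketch.
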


Now, we are ready to prove~\thmref{them:theorem1}.

\begin{proof}[Proof of~\thmref{them:theorem1}]
\label{prf:theorem1}
Let $b_t, s_t, K_t, M_{K_t}, \plcypi_t$ be as described in~\lemref{lem:accuracy}. Let $T = 1 / (1 - \discount) \ln(1 / \epsilon(1 - \discount))$, then $| \vfn^{\plcypi_t}_{M_{K_t}}(b_t, s_t, T) - \vfn^{\plcypi_t}_{M_{K_t}}(b_t, s_t) | \leq \epsilon$ (see Lemma 2 of~\cite{kearns2002near}). Let \escapeE denote the event that, a \tuple{s, a} pair not in $K_t$ is generated when executing $\optply_t$ starting from $s_t$ for $T$ time steps. We have 
\begin{align*}
& \vfn^{\optply_t}(b_t, s_t, T) \geq \vfn^{\plcypi_t}_{M_{K_t}}(b_t, s_t, T) - P(\escapeE) / (1 - \discount)^2 \\ 
  & \geq \vfn^{\plcypi_t}_{M_{K_t}}(b_t, s_t) - \epsilon - P(\escapeE) / (1 - \discount)^2 \\ 
  & \geq \vfni^{\plcypi_t}(b_t, s_t) - 2 \epsilon - P(\escapeE) / (1 - \discount)^2 \\
  & \geq \optvfn(b_t, s_t) - 3 \epsilon - P(\escapeE) / (1 - \discount)^2
\end{align*}
The first inequality follows from the fact that $\optply_t$ equals to $\plcypi_t$ unless \escapeE occurs, and $\qfni(b, s, a)$ can be bounded by $1 / (1 - \discount)^2$ since we can limit the reward bonus to $1 / (1 - \discount)$ and still maintain optimism. The second inequality follows from the definition of $T$ above, the third inequality follows from~\lemref{lem:accuracy}, the last inequality follows from~\lemref{lem:upBound} and the fact that $\plcypi_t$ is precisely the optimal policy for the internal reward \mdp at time $t$. Now, suppose $P(\escapeE) < \epsilon (1 - \discount)^2$, we have $\vfn^{\optply_t}(b_t, s_t) \geq \vfn^{\optply_t}(b_t, s_t, T) \geq \optvfn(b_t, s_t) - 4 \epsilon$. Otherwise, if $P(\escapeE) \geq \epsilon (1 - \discount)^2$, by Hoeffding inequality, this will happen no more than $\bigOhead \bigg{(} \underset{s, a}{\sum}\sampCplx(s, a) T / \epsilon (1 - \discount)^2) \bigg{)}$ time steps with probability $1 - \delta$, where \bigOhead(.) notation suppresses logarithmic factors.
\end{proof}

\section{Experiments}
\label{sec:experiments}
To evaluate our algorithm experimentally, we compare it with several state of the art algorithms in \pomdp literature. \pomcp~\cite{SilVen10} and \despot~\cite{SomYe13} are two successful \online \pomdp planners which can scale to very large POMDPs. \qmdp~\cite{ThrBur05} is a myopic \offline solver being widely used for its efficiency. \sarsop~\cite{KurHsu08} is a state of the art \offline \pomdp solver which helps to calibrate the best performance achievable for POMDPs of moderate size. \mmdp is a common myopic approximation of Bayesian planning, which does not do exploration. For \sarsop, \pomcp and \despot, we used the software provided by the authors, with a slight modification on \pomcp to make it strictly follow 1-second time limit for planning. For our algorithm and \mmdp, a \mdp needs to be solved each step. We use an \online \mdp solver \uct~\cite{kocsis2006bandit} with similar parameter settings used in \pomcp. The reward bonus scalar \disRB used by our algorithm is typically much smaller than the one required by~\thmref{them:theorem1}, which is a common trend for internal reward algorithms. We tuned \disRB \offline before using it for planning.

\begin{table*}[ht]
\centering
\caption{Performance comparison on \rocksample and \battleship}
\vspace{-0.7em}
\begin{tabular}{ c c c c c c c }
\hline
\hline
& $RS(7,8)$ & $RS(11,11)$ & $RS(15,15)$ & $RS(20,20)$ & $BS(10,5)$ & $BS(15,7)$ \\ 
\hline
States $|\stsph|$ & $12,544$ & $247,808$ & $7,372,800$ & $10^{8}$ & $10^{13}$ & $10^{20}$ \\ 
Actions $|\actsph|$ & $13$ & $16$ & $20$ & $25$ & $100$ & $225$ \\ 
Observations $|\obssph|$ & $3$ & $3$ & $3$ & $3$ & $2$ & $2$ \\ 
\hline
\qmdp & $17.55 \pm 0.44$ & $16.10 \pm 0.40$ & $-$ & $-$ & $-$ & $-$ \\
\hline
\sarsop & $\mathbf{21.47 \pm 0.04}$ & $\mathbf{21.56 \pm 0.11}$ & $-$ & $-$ & $-$ & $-$ \\ 
\hline
\pomcp & $19.95 \pm 0.23$ & $20.11 \pm 0.23$ & $15.51 \pm 0.23$ & $12.11 \pm 0.23$ & $57.40 \pm 0.19$ & $119.33 \pm 0.58$ \\ 
\hline
\despot & $20.80 \pm 0.22$ & $21.12 \pm 0.21$ & $\mathbf{18.59 \pm 0.43}$ & $12.56 \pm 0.38$ & $56.34 \pm 0.22$ & $117.39 \pm 0.88$  \\ 
\hline
\mmdp & $15.11 \pm 0.17$ & $11.64 \pm 0.17$ & $7.53 \pm 0.16$ & $6.96 \pm 0.11$ & $57.46 \pm 0.17$ & $122.60 \pm 0.59$ \\ 
\hline
\hpmdp & $21.03 \pm 0.21$ & $\mathbf{21.52 \pm 0.20}$ & $\mathbf{18.63 \pm 0.20}$ & $\mathbf{16.81 \pm 0.20}$ & $\mathbf{58.16 \pm 0.17}$ & $\mathbf{127.12 \pm 0.61}$ \\ 
\hline
\end{tabular}
\vspace{-2.5em}
\label{tab:pomdpProb}
\end{table*}


We first apply the algorithms on two benchmarks problems in \pomdp literature, in which we demonstrate the scaling up ability of our algorithm on larger POMDPs. In $\rocksample(n,k)$~\cite{SmiSim05}, a robot moving in an $n \times n$ grid which contains $k$ rocks, each of which may be good or bad with probability $0.5$ initially. At each step, the robot can move to an adjacent cell, or sense a rock. The robot can only sample the rock when it is in the grid which contains a rock. Sample a rock gives a reward of $+10$ if the rock is good and $-10$ otherwise. Move or Sample will not produce observation. Sensing produces an observation in set $\obssph = \{Good, Bad\}$ with accuracy decreasing exponentially as the robot's distance to the rock increases. The robot reaches the terminal state when it passes the east edge of the map. The discount factor is $0.95$. The hidden parameter is the property of rock and it remains constant, thus this problem can be modeled as \hpmdp.

In $\battleship(n, k)$~\cite{SilVen10}, $k$ ships are placed at random into a $n \times n$ grid, subject to the constraint that no ship may be placed adjacent or diagonally adjacent to another ship. Each ship has a different size of $(k+1) \times 1, k \times 1, ... , 2 \times 1$. The goal is to find and sink all ships. Initially, the agent does not know the configuration of the ships. Each step, the agent can fire upon one cell of the grid, and receives observation $1$ if a ship was hit, otherwise it will receive observation $0$. There is a $-1$ reward per step, and a terminal reward of $n \times n$ for hitting every cell of every ship. It is illegal to fire twice on the same cell. The discount factor is $1$. The hidden parameter is the configuration of ships, which remains constant, thus this problem can also be modeled as \hpmdp. 

The results for \rocksample and \battleship are shown in~\tabref{tab:pomdpProb}. All algorithms, except for \qmdp and \sarsop (\offline algorithms), run in real time with $1$ second per step. The result for \sarsop was replicated from~\cite{ong2010planning}, other results are from our own test and were averaged over 1000 runs. ``$-$'' means the problem size is too large for the algorithm. $RS$ is short for \rocksample, $BS$ is short for \battleship. As we can see from~\tabref{tab:pomdpProb}, our algorithm achieves similar performance with the state of the art \offline solvers when the problem size is small. However, when the size of problem increases, \offline solvers start to fail and our algorithm outperforms other \online algorithms.

Finally, we show a robot arm grasping task, which is originated from Amazon Picking Challenge. A \vrep~\cref{rohmer2013v} simulation view is shown in~\figref{fig:grasp:simu}. The goal of the robot arm is to grasp the cup out of the shelf quickly and robustly. The robot knows its configuration exactly and its movement is deterministic. However, due to sensor limitations, the initial position of the cup is uncertain. The gripper has a tactile sensor inside each finger, which gives positive readings when the inner part of the finger gets in touch with the cup. The robot needs to move around to localize the cup, and grasp it as soon as possible. Usually, this can be modeled as a \pomdp problem. However, if we model it as a \hpmdp, our algorithm can achieve much better performance compared with solving it as a \pomdp. Now, we introduce our planning model for this task. We restrict the movement of the gripper in a $2D$ plane, as shown in~\figref{fig:grasp:model}. We divide the $2D$ plane into $3$ regions relative to the gripper. If the cup is in region $0$, the gripper can get in touch with the cup by moving along x-axis or y-axis. If the cup is in region $1$, the gripper can get in touch with the cup by moving along the y-axis. If the cup is in region $2$, the gripper can not sense the cup by moving in a single direction. The gripper can move along the $x$ or $y$ axis with step size of $0.01$. The reward for each movement is $-1$ in region $0$, $-2$ in region $1$ and $-5$ in region $2$. The gripper can close or open its fingers, with reward of $-10$. Picking the cup gives a reward of $100$ if the pick is successful and $-100$ otherwise.

We compare our algorithm with \pomcp, \despot, and \mmdp, since \qmdp and \sarsop do not support continuous state space. All algorithm are tested via model evaluation and \vrep simulation. Model evaluation means we use the planning model to examine the policy. \vrep simulation means we compute the best action using the planning model, then execute it in V-REP simulation. The next state and observation are obtained from \vrep. The results for model evaluation and \vrep simulation are reported in~\tabref{tab:grasp}. The time used for \online planning of all algorithms is $1$ second per step. We run $1000$ trials for model evaluation and $100$ trials for \vrep simulation. As we can see, our algorithm achieves higher return and success rate in both settings compared with other algorithms.


\begin{figure}
\centering
\begin{tabular}{c}
\subfloat[Simulation view \label{fig:grasp:simu}]{\includegraphics[width=2.5in]{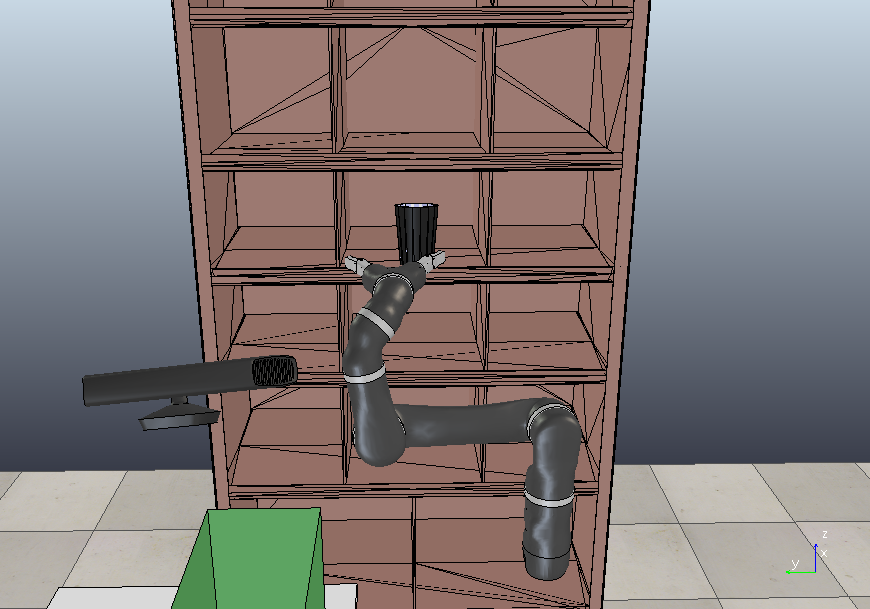}} \\
\subfloat[Simplified model \label{fig:grasp:model}]{\includegraphics[width=2.5in]{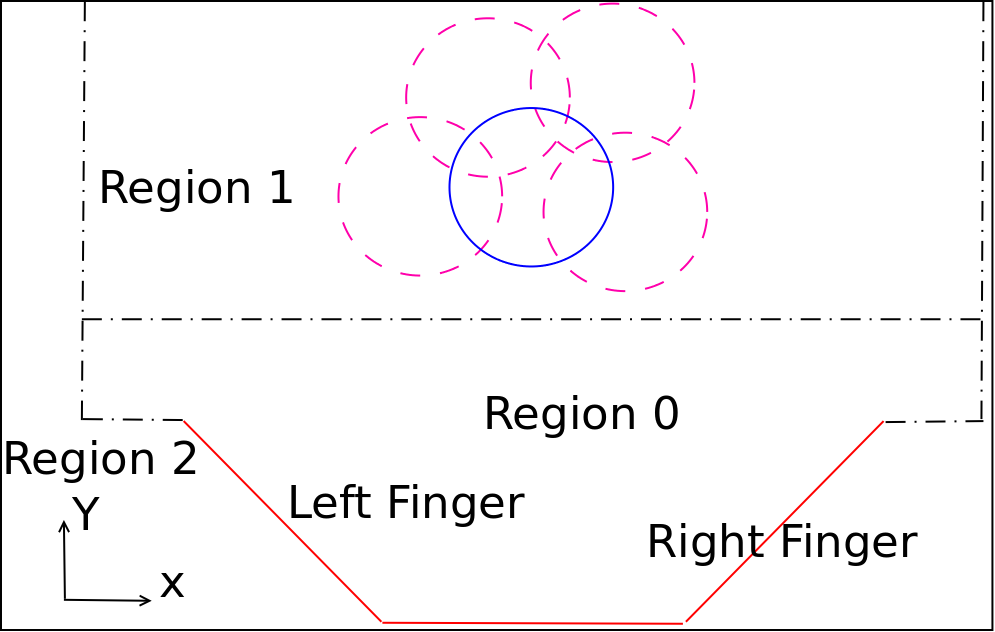}}
\end{tabular}
\caption{Robot arm grasping task}
\label{fig:grasp}
\end{figure}

\begin{table}
\centering
\caption{Performance comparison on grasping cup}
\vspace{-0.7em}
\begin{tabular}{@{\hspace{0em}} c @{\hspace{0em}} | @{\hspace{0.1em}} c @{\hspace{0.1em}} | @{\hspace{0.1em}} c @{\hspace{0.1em}}}
\hline
\hline
& Model Evaluation & V-REP simulation \\ 
\hline
States $|\stsph|$ & Continuous & Continuous \\
\hline
Actions $|\actsph|$ & 7 & 7 \\ 
\hline 
Observations $|\obssph|$ & 6 & 6 \\ 
\hline 
 & \begin{tabular}{c c}
Return & Success Rate 
\end{tabular} & \begin{tabular}{c c}
Return & Success Rate \\
\end{tabular} \\ 
\hline
\pomcp & \begin{tabular} {c c}
$14.17 \pm 0.36$ & $100\%$
\end{tabular} & \begin{tabular} {c c}
$12.91 \pm 2.10$ & $94\%$
\end{tabular} \\ 
\hline
\despot & \begin{tabular} {c c}
$11.35 \pm 0.46$ & $99.8\%$
\end{tabular} & \begin{tabular} {c c}
$9.63 \pm 2.52$ & $92\%$
\end{tabular} \\ 
\hline
\mmdp & \begin{tabular} {c c}
$14.16 \pm 0.46$ & $98.6\%$
\end{tabular} & \begin{tabular} {c c}
$11.28 \pm 1.81$ & $94\%$
\end{tabular} \\ 
\hline
\hpmdp & \begin{tabular} {c c}
$\mathbf{15.56 \pm 0.31}$ & $\mathbf{100\%}$
\end{tabular} & \begin{tabular} {c c}
$\mathbf{15.46 \pm 1.20}$ & $\mathbf{98\%}$
\end{tabular} \\ 
\hline
\end{tabular}
\vspace{-3.5em}
\label{tab:grasp}
\end{table}

\section{Conclusion}
\label{sec:conc}
We have introduced \hpmdp, a subclass of POMDP with hidden variables that are
either static or change deterministically.  A \hpmdp is equivalent to a set of
MDPs indexed by a hidden parameter.  By exploiting this equivalence, we have
developed a simple online algorithm for POMDP-lite through model-based
Bayesian reinforcement learning.  Preliminary experiments suggest that the
algorithm outperforms state-of-the-art general-purpose \pomdp solvers on very
large POMDP-lite models, makes it a promising tool for large-scale robot planning
under uncertainty.

Currently, we are implementing and experimenting with the algorithm on a
Kinova Mico robot for object manipulation. It is interesting and important to
investigate extensions that handle large observation and action spaces.

\addtolength{\textheight}{-0cm}   








\bibliography{icra16}

\newpage
\section*{Techinical Proofs}
\subsection{Proof of~\lemref{lem:dirichelet}}
\begin{proof}
\label{prf:dirichlet} 
Let $\alpha_0$ denote $n(s,a)$, and let $\alpha_i$ denote $n(s,a,s_i)$. According the definition of the Dirichlet distribution, $P(s_j|b,s,a) = \frac{\alpha_j}{\alpha_0}$. The reward bonus term can be described as
\begin{align*}
&\frac{\rebonus(b, s, a)}{\disRB} = \underset{s_j \in S}{\sum}|P(s_j|b',s,a) - P(s_j|b,s,a)| \\ 
& = \underset{s_k}{\ev} \bigg{[} \underset{s_j \neq s_k}{\sum}\big{(}\frac{\alpha_j}{\alpha_0} - \frac{\alpha_j}{\alpha_0 + 1}\big{)} + \big{(}\frac{\alpha_k + 1}{\alpha_0 + 1} - \frac{\alpha_k}{\alpha_0}\big{)} \bigg{]} \\ 
&= P(s_k|b,s,a) \bigg{(} \underset{s_j \neq s_k}{\sum} \frac{\alpha_j}{\alpha_0 + \alpha_0^2} + \frac{\alpha_0 - \alpha_k}{\alpha_0 + \alpha_0^2} \bigg{)} \\
& \leq \underset{s_j}{\sum} \frac{\alpha_j}{\alpha_0 + \alpha_0^2} + \frac{\alpha_0}{\alpha_0 + \alpha_0^2} = \frac{2}{1 + \alpha_0}
\end{align*}
As for the sample complexity, if $\rebonus(b, s, a) < \sampAccu$, we have
\begin{align*}
&\rebonus(b, s, a) \leq \frac{2 \disRB}{1 + \alpha_0} \leq \sampAccu \Rightarrow \alpha_0 = O \big{(} \frac{\disRB}{\sampAccu} \big{)}= O \big{(} \frac{|S|^2|A|}{\sampAccu (1 - \discount)^2} \big{)}
\end{align*}
\end{proof}

\subsection{Proof of~\lemref{lem:upBound}}

We first introduce some notations which will be used in the proof. Denote a $T$ step history as $h_T = \{ b_0, s_0, a_0, b_1, s_1, a_1, ... , b_T, s_T \}$, where $b_t$ and $s_t$ is the belief and state at time step $t$. The following is a definition of \Lone divergence on reward function and transition function.

\begin{defn}
\label{def:L1DivergenceOnRT}
Denote \mRsp as the set of mean reward function, $\mTsp$ as the set of mean transition function. \ie, given belief $b$, $\mRsp(b) = \{R(b, s, a) | s \in S, a \in A\}$, $\mTsp(b) = \{T(b, s, a, s') | s, s' \in S, a \in A\}$. Suppose the belief changes from $b_i$ to $b_j$, the $\Lone$ divergence of $\mathbb{R}$ and $\mathbb{T}$ are denoted as:
\begin{align*}
& \mRsp(b_i, b_j) = \|\mRsp(b_j) - \mRsp(b_i)\|_1 = \underset{s, a}{\sum}\big{|}R(b_j, s, a)-R(b_i, s, a)\big{|} \\
&\mTsp(b_i, b_j) = \|\mTsp(b_j) - \mTsp(b_i)\|_1 \\ 
& = \underset{s, a}{\sum}\underset{s'}{\sum}\big{|}T(b_j, s, a, s') - T(b_{i}, s, a, s')\big{|}
\end{align*}
\end{defn} 

Based on~\defref{def:L1DivergenceOnRT}, we introduce the regret of a $T$ step history if the belief is not updated each step. 

\begin{defn}
\label{def:RegretOfPolicy}
Given a $T$ step history $h_T$, if the belief is not updated, the regret of the \ith action is defined as $\vE(b_0, i) = \mRsp (b_0,b_i) + \upbvi \mTsp(b_0,b_i)$. Define $\accRE(b_0, T) = \underset{i = 0}{\overset{T}{\sum}} \discount^i \vE(b_0, i, \plcypi)$ as the total regret of the history.
\end{defn}

The following definition measures the extra value from reward bonus term when we are using internal reward.

\begin{defn}
\label{def:BonusPolicy}
Given a policy $T$ step history $h_T$, the reward bonus for the \ith action is $\vB(b_0, i) = \rebonus(b_0, s_i, a_i)$. Define $\accRB(b_0, T) = \underset{i = 0}{\overset{T}{\sum}} \discount^i \vB(b_0, i)$ as the total extra value from reward bonus.
\end{defn}

In the next lemma, we are going to bound the regret using the extra value from reward bonus.

\begin{lemma}
\label{lem:rbUpperBound}
Given a $T$ step history $h_T$, let $\accRE(b_0, T)$ be the regret of not updating the belief, as defined in~\defref{def:RegretOfPolicy}. Let $\accRB(b_0, T)$ be the extra value from reward bonus term, as defined in~\defref{def:BonusPolicy}. If the constant tunning factor of reward bonus $\disRB = \bigO \big{(} \frac{|S|^2|A|T}{(1 - \discount)} \big{)}$, then $\accRB(b_0, T) \geq \accRE(b_0, T)$.
\end{lemma}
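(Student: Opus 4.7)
The plan is to relate both $\accRE(b_0, T)$ and $\accRB(b_0, T)$ to the $\Lone$ distance $\|b_i - b_0\|_1$ between the reference belief and the belief at each step, and then pick $\disRB$ large enough to force the bonus to dominate the regret term by term.

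First I would bound the per-step regret by a constant multiple of $\|b_i - b_0\|_1$. Since the mean reward $R(b, s, a) = \sum_k b(\parm_k) R(\parm_k, s, a)$ is linear in $b$ and $R(\parm_k, s, a) \in [0,1]$, the triangle inequality yields $|R(b_i, s, a) - R(b_0, s, a)| \leq \|b_i - b_0\|_1$, so $\mRsp(b_0, b_i) \leq |S| |A| \cdot \|b_i - b_0\|_1$. An identical argument on the transition function gives $\mTsp(b_0, b_i) \leq |S|^2 |A| \cdot \|b_i - b_0\|_1$. Plugging into Definition~\ref{def:RegretOfPolicy} yields
$$\vE(b_0, i) \;\leq\; c\, \|b_i - b_0\|_1, \qquad c = O\!\left(\tfrac{|S|^2 |A|}{1-\gamma}\right),$$
and summing with discounts gives $\accRE(b_0, T) \leq c \sum_{i=0}^T \gamma^i \|b_i - b_0\|_1$.

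Second, I would lower bound $\accRB(b_0, T)$ in terms of the same sum. Each bonus term $\vB(b_0, i) = \disRB \, \ev_{s'}[\|b_{s'} - b_0\|_1]$, where $b_{s'}$ is the one-step Bayesian update from $b_0$ after observing $s'$, measures the expected one-step drift away from $b_0$ under the current mean model. Combining the triangle inequality $\|b_i - b_0\|_1 \leq \sum_{j < i} \|b_{j+1} - b_j\|_1$ with an interchange of summation and the trivial bound $\|b_i - b_0\|_1 \leq 2$, one can control $\sum_i \gamma^i \|b_i - b_0\|_1$ by a $T$-fold sum of one-step drifts, each of which is captured on average by a bonus term. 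Setting $\disRB = O\!\left(|S|^2 |A| T / (1-\gamma)\right)$ is then enough to conclude $\accRB(b_0, T) \geq \accRE(b_0, T)$.

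The main obstacle is the mismatch between the reference belief $b_0$ used in the bonus formula and the actual beliefs $b_i$ traversed along the trajectory: the bonus at step $i$ measures a counterfactual one-step update starting from $b_0$, whereas the regret at step $i$ depends on the cumulative drift $\|b_i - b_0\|_1$. Reconciling the two requires carefully exploiting linearity of the mean reward and transition functions in $b$ together with the martingale property of Bayesian updates, and the factor of $T$ in the final choice of $\disRB$ is precisely what absorbs the worst-case slack when informative observations are spread across all $T$ steps rather than concentrated early.
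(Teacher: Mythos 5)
Your overall architecture matches the paper's: linearity of the mean reward and transition functions in $b$, together with $R(\parm,s,a)\in[0,1]$, gives $\mRsp(b_0,b_i) \leq |S||A|\,\|b_i-b_0\|_1$ and $\mTsp(b_0,b_i)\leq |S|^2|A|\,\|b_i-b_0\|_1$; the telescoping bound $\|b_i-b_0\|_1 \leq \sum_{l<i}\|b_{l+1}-b_l\|_1$ reduces everything to one-step drifts; and the interchange of summation $\sum_{i=0}^{T}\gamma^i\sum_{l<i}(\cdot) \leq T\sum_{i=0}^{T}\gamma^i(\cdot)$ is exactly where the factor $T$ in the required $\disRB$ comes from. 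Up to that point you are reproducing the paper's argument.

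However, there is a genuine gap at the step you yourself flag as ``the main obstacle'' but do not resolve. The one-step drift $\|b_{l+1}-b_l\|_1$ corresponds (in expectation over the next state) to $\rebonus(b_l,s_l,a_l)/\disRB$, a bonus evaluated at the \emph{intermediate} belief $b_l$, whereas $\vB(b_0,l)=\rebonus(b_0,s_l,a_l)$ is the bonus evaluated at the \emph{initial} belief $b_0$. To pass from the former to the latter you need $\rebonus(b_l,s_l,a_l)\leq\rebonus(b_0,s_l,a_l)$, which is precisely \asmref{assum:monotonicityofrb} (monotone decrease of the reward bonus); the paper invokes it explicitly at exactly this point. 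Your proposal never does, and the ``martingale property of Bayesian updates'' you appeal to instead does not supply this inequality: the martingale property says the posterior equals the prior in expectation, which says nothing about whether the expected \Lone displacement from a later belief is smaller than from an earlier one. Without the monotonicity assumption the chain breaks, since nothing prevents a state-action pair from being uninformative at $b_0$ (tiny bonus $\vB(b_0,l)$) yet highly informative at $b_l$ (large realized drift). The trivial bound $\|b_i-b_0\|_1\leq 2$ you mention plays no role in closing this gap and cannot substitute for the assumption.
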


\begin{proof}
\label{prf:rbUpperBound}
We begin the proof by showing that the $\Lone$ divergence of the reward function is bounded by the reward bonus if \disRB was chosen properly. 
\begin{align*}
&\mRsp(b_0, b_i) = \underset{s ,a}{\sum}\big{|}R(b_i, s, a) - R(b_0, s, a)\big{|} \\ 
&= \underset{s, a}{\sum}\big{|}\underset{\parm_i \in \parsph}{\sum}R(\parm_i, s, a)(b_i(\parm_i) - b_0(\parm_i))\big{|} \\
& \leq \underset{s, a}{\sum}\underset{\parm_i \in \parsph}{\sum}|b_i(\parm_i) - b_0(\parm_i)| \\ 
&= |S||A|\|b_i - b_0\|_1 \leq |S||A| \overset{i - 1}{\underset{l = 0}{\sum}} \|b_{l + 1} - b_{l}\|_1\\ 
&= \frac{|S||A|}{\disRB}\overset{i - 1}{\underset{l = 0}{\sum}}\rebonus(b_l, s_l, a_l) \\
& \leq \frac{|S||A|}{\disRB}\overset{i - 1}{\underset{l = 0}{\sum}}\rebonus(b_{0}, s_l, a_l) \\ 
&= \frac{|S||A|}{\disRB} \overset{i - 1}{\underset{l = 0}{\sum}} \vB(b_0, l)
\end{align*}

The first inequality above follows from the fact that $R(\parm_i, s, a)$ is bounded in $[0, 1]$, and triangle inequality. The second inequality also follows from triangle inequality, \ie, $ \|b_{s_i} - b_0\|_1 \leq \|b_{s_i} - b_{s_{i - 1}}\|_1 + ... +  \|b_{s_1} - b_{0}\|_1 $. The third inequality follows from our monotonicity assumption on the reward bonus: $\rebonus(b_{l}, s_l, a_l) \leq \rebonus(b_0, s_l, a_l)$. 

Similarly, we can show that the \Lone divergence of the transition function is bounded by the reward bonus.

\begin{align*}
\mTsp(b_0,b_{s_i}) \leq \frac{|S|^2|A|}{\disRB}\overset{i - 1}{\underset{l = 0}{\sum}}\vB(b_{0}, l)
\end{align*}

Finally, we are going to show that the total regret $\accRE(b_0, T)$ of $h_T$ can be bounded by the extra value from reward bonus $\accRB(b_0, T)$:

\begin{align*}
&\accRE(b_0, T) = \overset{T}{\underset{i = 0}{\sum}} \discount^i \vE(b_0, i) \\
& = \overset{T}{\underset{i = 0}{\sum}} \discount^i \bigg{(}\mRsp(b_0,b_i) + \upbvi \mTsp(b_0,b_i)\bigg{)} \\ 
&\leq \bigO \bigg{(} \frac{|S|^2|A|}{\disRB (1 - \discount)} \bigg{)} \overset{T}{\underset{i = 0}{\sum}} \discount^i \bigg{(} \overset{i - 1}{\underset{l = 0}{\sum}} \vB(b_{0}, l) \bigg{)} \\
& \leq \bigO \bigg{(} \frac{|S|^2|A| T}{\disRB (1 - \discount)} \bigg{)} \overset{T}{\underset{i = 0}{\sum}} \discount^i \vB(b_{0}, i) = \accRB(b_0, i)
\end{align*}

The first inequality above follows from the fact that the \Lone divergence of reward function and transition is bounded by the reward bonus for some value of \disRB. The second inequality follows $\overset{T}{\underset{i = 0}{\sum}} \discount^i \bigg{(} \overset{i - 1}{\underset{l = 0}{\sum}} \vB(b_{0}, l) \bigg{)} \leq \overset{T - 1}{\underset{i = 0}{\sum}} \discount^i (T - i) \vB(b_{0}, i) \leq T \overset{T}{\underset{i = 0}{\sum}} \discount^i \vB(b_{0}, i) $.

\end{proof}

Now, we are ready to prove~\lemref{lem:upBound}. 

\begin{proof}[Proof of~\lemref{lem:upBound}]
\label{prf:upBound}
Let $T = 1 / (1 - \discount) \ln(1 / \epsilon(1 - \discount))$, then $| \optvfn(b_0, s_0, T) - \optvfn(b_0, s_0) | \leq \epsilon$ (see Lemma 2 of~\cite{kearns2002near}), where $b_0$ and $s_0$ is the initial belief and state. Consider some state $s$, let $b_t$ be the new belief formed by updating $b_0$ after $t \leq T$ steps, then

\begin{equation}
\label{eq:upBound}
\begin{split}
& \accRB(b_0, t-1) - \accRE(b_0, t - 1) + \\
& \optvfni_{T-t}(b_0,s) - \optvfn_{T-t}(b_t, s) \\ 
& = \accRB(b_0, t-1) - \accRE(b_0, t - 1) + \\
& \underset{a}{\max}\bigg{\{} R(b_0, s, a) + \rebonus(b_0, s, a) + \\ 
& \underset{s'}{\sum}P(s' | b_0, s, a)\optvfni_{T-t-1}(b, s') \bigg{\}} - \\ 
& \underset{a}{\max}\bigg{\{} R(b_t, s, a) + \underset{s'}{\sum}P(s' | b_t, s, a)\optvfni_{T-t-1}(b_{t+1}, s') \bigg{\}} \\ 
& \geq \accRB(b_0, t-1) - \accRE(b_0, t - 1) + \\
& \underset{a}{\min}\bigg{\{} \rebonus(b_0, s, a) - \big{(}R(b_t, s, a\big{)} - R(b_0, s, a)) + \\ 
& \underset{s'}{\sum}P(s' | b_0, s, a)\optvfni_{T-t-1}(b_0, s') - \\ 
& \underset{s'}{\sum}P(s' | b_t, s, a)\optvfni_{T-t-1}(b_{t+1}, s') \bigg{\}} \\ 
& \geq \accRB(b_0, t-1) - \accRE(b_0, t - 1) + \\ 
& \underset{a}{\min}\bigg{\{} \rebonus(b_0, s, a) - \big{(}R(b_t, s, a\big{)} - R(b_0, s, a)) - \\
& T \underset{s'}{\sum}|P(s | b_t, s, a) - P(s' | b_0, s, a)| + \\ 
& \underset{s'}{\sum}P(s' | b_0, s, a) \big{(} \optvfni_{T-t-1}(b, s') - \optvfni_{T-t-1}(b_{t+1}, s') \big{)} \bigg{\}} \\ 
& \geq \accRB(b_0, t) - \accRE(b_0, t) + \\
& \underset{a}{\min}\bigg{\{} \underset{s'}{\sum}P(s' | b_0, s, a) \big{(} \optvfni_{T-t-1}(b, s') - \optvfni_{T-t-1}(b_{t+1}, s') \bigg{\}} \\ 
& \geq \accRB(b_0, t) - \accRE(b_0, t) + \\ 
& \underset{s}{\min}\big{\{} \optvfni_{T-t-1}(b, s) - \optvfni_{T-t-1}(b_{t+1}, s) \big{\}} 
\end{split}
\end{equation}

The first inequality transformation in~\equref{eq:upBound} follows:
\begin{align*}
\underset{x}{\max}f(x) - \underset{x}{\max}g(x) \geq \underset{x}{\min} \big{(} f(x) - g(x) \big{)}
\end{align*}

The second inequality transformation in \equref{eq:upBound} follows:
\begin{equation*}
\begin{split}
&\underset{x}{\sum}p(x)f(x)-\underset{x}{\sum}q(x)g(x) \\
& \geq \underset{x}{\sum}p(x)(f(x)-g(x))-\underset{x}{\sum}|q(x)-p(x)|g(x)
\end{split}
\end{equation*}

The third inequality transformation in \equref{eq:upBound} follows the definition of \accRB and \accRE.

Since $s$ is arbitrary in~\equref{eq:upBound}, we have for any history $h_T$ and $t \leq T$,
\begin{equation}
\begin{split}
& \accRB(b_0, t-1) - \accRE(b_0, t - 1) + \\
& \optvfni_{T-t}(b_0,s) - \optvfn_{T-t}(b_t, s) \\ 
& \geq \accRB(b_0, t) - \accRE(b_0, t) + \\ 
& \underset{s}{\min}\big{\{} \optvfni_{T-t-1}(b, s) - \optvfni_{T-t-1}(b_{t+1}, s) \big{\}} 
\end{split}
\label{eq:upBound1}
\end{equation}

Where $\accRB(b_0, t) = 0, \accRE(b_0, t) = 0$ if $t \leq 0$.

Apply~\equref{eq:upBound1} repeatedly for $T$ steps, we get 
\begin{align*}
& \optvfni_{T}(b_0,s) - \optvfn_{T}(b_0, s) \\ 
& \geq \accRB(b_0, T) - \accRE(b_0, T) + \\ 
& \underset{s}{\min}\big{\{} \optvfni_{T-T}(b, s) - \optvfni_{T-T}(b_{T+1}, s) \big{\}} \geq 0
\end{align*}

The second inequality above follows~\lemref{lem:rbUpperBound} then we have $\optvfni(b_0,s) \geq \optvfni_{T}(b_0,s) \geq \optvfn_{T}(b_0, s) \geq \optvfn(b_0, s) - \epsilon$, which proves our lemma.

\end{proof}

\subsection{Proof of~\lemref{lem:accuracy}}
\begin{proof}
\label{prf:accuracy}
Consider the sequence of states, actions by following $\plcypi_t$ at state $s_t$ and belief $b_t$, $p_t = s_t,a_t,s_t^1,a_t^1, ... ,s_t^m,a_t^m$. Suppose \tuple{s_t^k, a_t^k} is the first state action pair not in $K$ is generated. We have $\vfni^{\plcypi}(b_t, s_t) = \overset{k - 1}{\underset{i = 0}{\sum}} \discount^i \rewFim(b_t, s_t^i, a_t^i) + \discount^k \qfni(b_t, s_t^k, a_t^k)$ and $\vfn_{M_{K_t}}^{\plcypi}(b_t, s_t) = \overset{k - 1}{\underset{i = 0}{\sum}} \discount^i \rewFm(b_t, s_t^i, a_t^i) + \discount^k \qfni(b_t, s_t^k, a_t^k)$. Then, $\vfni^{\plcypi}(b_t, s_t) - \vfn_{M_{K_t}}^{\plcypi}(b_t, s_t) \leq \overset{k - 1}{\underset{i = 0}{\sum}} \discount^i \epsilon(1 - \discount) \leq \epsilon$.
\end{proof}

\end{document}